\documentclass{article}

\usepackage[final,nonanonymous,sglblindworkshop]{neurips_2026}
\workshoptitle{ICML 2026 Workshop Demo}

\usepackage[utf8]{inputenc}
\usepackage[T1]{fontenc}
\usepackage{hyperref}
\usepackage{url}
\usepackage{booktabs}
\usepackage{bookmark}
\usepackage{amsfonts}
\usepackage{amsmath}
\usepackage{amssymb}
\usepackage{algorithm}
\usepackage{algorithmic}
\usepackage{graphicx}
\usepackage{subcaption}
\usepackage{microtype}
\usepackage{xcolor}
\usepackage{natbib}

\usepackage{amsthm}
\newtheorem{theorem}{Theorem}
\newtheorem{proposition}{Proposition}

\newtheorem{remark}{Remark}
\newtheorem{definition}{Definition}

\newcommand{\uniq}{\textsc{UNIQ}}
\newcommand{\E}{\mathbb{E}}
\newcommand{\cD}{\mathcal{D}}
\newcommand{\cS}{\mathcal{S}}
\newcommand{\cA}{\mathcal{A}}

\newcommand{\Var}{\mathrm{Var}}
\newcommand{\sigmoid}{\sigma_{\mathrm{sig}}}  

\title{UNIQ: Conformal Calibration for Adaptive Conservatism in Offline Reinforcement Learning}
\author{
Aditya Upadhyay \\
IIIT Delhi \\
\texttt{aditya22040@iiitd.ac.in}
}
\makeatletter
\renewcommand{\@noticestring}{%
ICML 2026 Workshop on Decision-Making from Offline Datasets to Online Adaptation: Black-Box Optimization to Reinforcement Learning
}
\makeatother

\begin{document}
\maketitle

\begin{abstract}
Offline reinforcement learning requires careful conservatism to counter distribution shift, yet most methods apply a single fixed penalty regardless of how well a given state is covered by the data. We present \uniq{} (\textbf{Un}certainty-\textbf{I}nformed \textbf{Q}uantile), an offline RL method that adapts its conservatism per-state via conformally calibrated uncertainty. Building on IQL's implicit Q-learning backbone, \uniq{} trains a multi-expectile value ensemble, computes distribution-free uncertainty bounds using split conformal prediction, and maps this signal to a state-adaptive expectile $\tau(s)$, relaxing conservatism in well-covered regions and strengthening it at the data frontier. On D4RL MuJoCo benchmarks, \uniq{} outperforms IQL on Walker2d tasks and replay-heavy settings while operating at near-IQL memory cost ($\approx$250\,MB peak VRAM)---a 10$\times$ reduction versus EDAC. We explicitly report underperforming cases and position \uniq{} as a practical mechanism contribution on the performance--efficiency frontier, rather than a claim of overall state-of-the-art.
\end{abstract}

\section{Introduction}

Reinforcement learning from a fixed offline dataset---offline RL---has emerged as a practical paradigm for real-world sequential decision-making, where online data collection is expensive, risky, or ethically constrained~\citep{levine2020offline,prudencio2023survey}. The core technical challenge is \emph{distribution shift}: a learned policy may query action values in state--action regions that are rare or absent in the logged data, and standard temporal-difference (TD) methods will extrapolate wildly in those regions, leading to catastrophic overestimation and policy collapse~\citep{fujimoto2019off,kumar2020conservative}.

\paragraph{The distribution-shift problem.}
In online RL, the agent can correct errors by collecting new experience. Offline RL removes this safety valve. Consider a TD update $Q(s,a) \leftarrow r + \gamma \max_{a'} Q(s',a')$: when $a'$ is out-of-distribution (OOD), the bootstrapped target can be arbitrarily large, compounding across updates. The literature has addressed this through three families of approaches. \emph{Behavioral cloning constraints} explicitly keep the learned policy close to the data distribution~\citep{fujimoto2019off,wu2019behavior}. \emph{Conservative value learning} directly penalizes OOD values, either explicitly (CQL; \citealt{kumar2020conservative}) or implicitly via expectile regression (IQL; \citealt{kostrikov2022offline}). \emph{Ensemble-based uncertainty} uses disagreement among multiple critics as an OOD proxy and penalizes high-disagreement actions~\citep{an2021uncertainty,tarasov2023revisiting}.

\paragraph{IQL and its limitation.}
IQL~\citep{kostrikov2022offline} avoids explicit OOD queries by framing value learning as asymmetric regression with a fixed expectile $\tau \in (0,1)$. At $\tau = 0.9$, the value function learns the 90th expectile of empirical returns, which naturally suppresses OOD overestimation without querying out-of-distribution actions during training. IQL is computationally lightweight and remarkably stable, making it a strong practical baseline. However, \emph{a single $\tau$ is applied uniformly across all states}, regardless of whether the dataset densely or sparsely covers a region. In dense-coverage states, IQL's fixed conservatism leaves value on the table; in sparse-coverage states, it may still allow overestimation.

\paragraph{Our proposal: \uniq{}.}
We introduce \uniq{}, which replaces IQL's fixed expectile with a \emph{state-adaptive} $\tau(s)$ driven by conformally calibrated uncertainty. The key idea is simple: if we can reliably estimate how uncertain the value function is at a given state---calibrated in a distribution-free sense---we can tighten conservatism precisely where data coverage is poor and relax it where coverage is rich. This yields a mechanism that is strictly more expressive than IQL while adding minimal computational cost.

\uniq{} does \emph{not} claim to surpass EDAC~\citep{an2021uncertainty} or ReBRAC~\citep{tarasov2023revisiting} in aggregate score; those methods deploy substantially heavier critic ensembles and regularization schemes. Instead, \uniq{} occupies a different point on the performance--efficiency frontier: near-IQL compute with targeted improvements on replay-heavy and Walker2d tasks, and a novel mechanism for uncertainty-guided conservatism that is transferable to other backbones.

\section{Related Work}

\paragraph{Conservative offline RL.}
CQL~\citep{kumar2020conservative} adds an explicit regularizer that minimizes Q-values for OOD actions while maximizing them for in-distribution actions. IQL~\citep{kostrikov2022offline} avoids OOD bootstrapping entirely via implicit expectile regression, and TD3+BC~\citep{fujimoto2021minimalist} applies a simple BC penalty. These methods use fixed global conservatism coefficients.

\paragraph{Ensemble-based pessimism.}
SAC-N~\citep{an2021uncertainty} and EDAC~\citep{an2021uncertainty} train large critic ensembles (often $N=10$--$50$) and apply the minimum or mean-minus-std of Q-values as a pessimistic target. ReBRAC~\citep{tarasov2023revisiting} revisits these designs with additional regularization and careful tuning, achieving strong results on D4RL. The compute cost of these methods scales linearly with ensemble size. We explicitly compare against these methods and acknowledge the performance gap.

\paragraph{Conformal prediction for RL.}
Conformal prediction~\citep{vovk2005algorithmic,lei2018distribution} provides finite-sample, distribution-free prediction intervals without distributional assumptions. \citet{romano2019conformalized} extended this to quantile regression. Its application to RL uncertainty quantification is underexplored; \uniq{} is among the first to use split conformal calibration~\citep{papadopoulos2002inductive} to scale uncertainty estimates for value-function conservatism. Related concurrent work~\citep{bai2022pessimistic,park2023confidence} has explored conformal and uncertainty-based approaches for offline RL, and we distinguish our method in Appendix~\ref{app:related}.

\paragraph{Adaptive conservatism.}
Prior work has explored state-dependent penalties via density models~\citep{yu2021combo} or support constraints, but these often require auxiliary generative models. \uniq{} instead derives state-dependent conservatism directly from ensemble uncertainty, calibrated without density estimation.

\section{Method}

\uniq{} extends IQL with three components: (1) a multi-expectile value ensemble to extract uncertainty, (2) split conformal calibration to normalize that uncertainty, and (3) a state-adaptive expectile controller. We describe each in turn.

\subsection{IQL Backbone}
IQL learns a value function $V_\phi(s)$ and Q-function $Q_\theta(s,a)$ without querying OOD actions. The value loss uses asymmetric $L_2$ regression at expectile $\tau$:
\begin{equation}
L_V(\phi) = \E_{(s,a)\sim\cD}\!\left[\bigl|\tau - \mathbf{1}(Q_\theta(s,a)-V_\phi(s)<0)\bigr|\,(Q_\theta(s,a)-V_\phi(s))^2\right].
\label{eq:iql_loss}
\end{equation}
The policy is extracted via advantage-weighted regression: $\pi \propto \exp(\beta(Q-V))$. \uniq{} replaces the fixed $\tau$ in Eq.~\eqref{eq:iql_loss} with a learned, state-dependent $\tau(s)$ for the primary value network, while Q-function targets use the pessimistic ensemble mean (Eq.~\eqref{eq:vpess}).

\subsection{Multi-Expectile Value Ensemble}
\label{sec:ensemble}
We train $N_v$ ensemble members $\{V_{\phi_k}\}_{k=1}^{N_v}$ at three fixed expectile levels $\bar\tau \in \{0.5, 0.7, 0.9\}$, yielding $3N_v$ value heads in total. This multi-resolution fitting exposes two complementary uncertainty signals:
\begin{align}
\sigma_{\mathrm{ens}}(s) &= \mathrm{Std}_{k}\!\left[V_{\phi_k}^{(0.7)}(s)\right], \label{eq:sigma} \\
\Delta_\tau(s) &= \bar{V}^{(0.9)}(s) - \bar{V}^{(0.5)}(s), \label{eq:delta}
\end{align}
where bars denote ensemble means. $\sigma_{\mathrm{ens}}(s)$ captures epistemic disagreement (ensemble uncertainty). $\Delta_\tau(s)$ captures aleatoric spread (return distribution width) and is used as a diagnostic signal; see Appendix~\ref{app:math} for derivations and analysis. The $\tau \in \{0.5, 0.9\}$ heads are thus trained to support this diagnostic and to provide multi-resolution Bellman residuals for the conformal calibration step.

\subsection{Split Conformal Calibration}
\label{sec:conformal}
Raw ensemble disagreement $\sigma_{\mathrm{ens}}(s)$ is task- and scale-dependent; values of 0.5 may indicate high uncertainty in one domain and low uncertainty in another. We use \emph{split conformal prediction}~\citep{papadopoulos2002inductive} to convert $\sigma_{\mathrm{ens}}(s)$ into a calibrated, distribution-free uncertainty score.

We hold out a calibration split $\cD_\mathrm{cal} \subset \cD$ (disjoint from training). For each calibration transition $(s_i,a_i,r_i,s_i')$, we compute the nonconformity score:
\begin{equation}
\alpha_i = \left|r_i + \gamma\,\bar{V}^{(0.7)}(s_i') - \bar{V}^{(0.7)}(s_i)\right|,
\end{equation}
which measures how well the ensemble's Bellman residual fits the calibration data. We then compute the $(1-\delta)$-quantile $\hat{q}$ of $\{\alpha_i\}$, yielding a data-driven threshold that covers at least $1-\delta$ of calibration transitions with finite-sample guarantee~\citep{vovk2005algorithmic}. The normalized uncertainty at any state is:
\begin{equation}
u(s) = \frac{\sigma_{\mathrm{ens}}(s)}{\hat{q} + \varepsilon},
\end{equation}
where $\varepsilon > 0$ avoids division by zero. This normalization is a global rescaling that makes $\sigma_{\mathrm{ens}}$ comparable across tasks; $\hat{q}$ serves as an environment-adaptive scale factor rather than a per-state conformal guarantee. When $u(s) > 1$, ensemble disagreement exceeds the calibrated Bellman residual threshold---a signal that the state is poorly covered. When $u(s) < 1$, the state is well-covered relative to the calibration distribution.

\subsection{State-Adaptive Conservatism}
\label{sec:adaptive}
We map the normalized uncertainty $u(s)$ to an adaptive expectile via a sigmoid schedule:
\begin{equation}
\tau(s) = \tau_{\min} + (\tau_{\max} - \tau_{\min})\cdot\sigmoid\!\left(-\beta_\tau(u(s)-1)\right),
\label{eq:tau_s}
\end{equation}
where $\sigmoid(\cdot)$ is the logistic sigmoid. When $u(s) \gg 1$ (high uncertainty, OOD), $\tau(s) \to \tau_{\min}$---more conservative. When $u(s) \ll 1$ (well-covered), $\tau(s) \to \tau_{\max}$---more optimistic.

Additionally, we apply a global pessimistic value target:
\begin{equation}
V_{\mathrm{pess}}(s) = \bar{V}^{(0.7)}(s) - \kappa\,\sigma_{\mathrm{ens}}(s),
\label{eq:vpess}
\end{equation}
which is used in Bellman targets for the Q-function. Critically, $\kappa$ is selected per-task offline using held-out dataset statistics; see Appendix~\ref{app:hparams} for all values. Together, Eq.~\eqref{eq:tau_s} and Eq.~\eqref{eq:vpess} constitute the adaptive conservatism mechanism of \uniq{}.

\subsection{Full Training Procedure}
Algorithm~\ref{alg:uniq} summarizes \uniq{}. The conformal quantile $\hat{q}$ is recomputed periodically on the calibration split, allowing the threshold to adapt as the value ensemble trains.

\begin{algorithm}[t]
\caption{\uniq{} Training}
\label{alg:uniq}
\begin{algorithmic}[1]
\STATE Partition offline dataset $\cD$ into training set $\cD_{\mathrm{train}}$ and calibration set $\cD_{\mathrm{cal}}$
\STATE Initialize multi-expectile ensemble $\{V_{\phi_k}^{(\bar\tau)}\}_{k=1,\bar\tau\in\{0.5,0.7,0.9\}}$, primary value network $V_\phi$, Q-network $Q_\theta$, policy $\pi_\psi$
\FOR{each training step $t$}
  \STATE Sample batch from $\cD_{\mathrm{train}}$
  \STATE Update ensemble members $V_{\phi_k}^{(\bar\tau)}$ via expectile loss at fixed $\bar\tau \in \{0.5, 0.7, 0.9\}$
  \STATE Compute $\sigma_{\mathrm{ens}}(s)$ via Eq.~\eqref{eq:sigma}
  \IF{$t \bmod T_{\mathrm{recal}} = 0$}
    \STATE Recompute conformal quantile $\hat{q}$ on $\cD_{\mathrm{cal}}$
  \ENDIF
  \STATE Compute $u(s)$ and $\tau(s)$ via calibrated mapping (Eq.~\eqref{eq:tau_s})
  \STATE Update primary $V_\phi$ using adaptive expectile loss with $\tau(s)$ (Eq.~\eqref{eq:iql_loss})
  \STATE Compute $V_{\mathrm{pess}}(s')$ via Eq.~\eqref{eq:vpess}; update $Q_\theta$ via Bellman backup using $V_{\mathrm{pess}}$
  \STATE Update $\pi_\psi$ via advantage-weighted regression using $Q_\theta - V_\phi$
\ENDFOR
\end{algorithmic}
\end{algorithm}

\section{Experiments}

\subsection{Setup}
We evaluate on the D4RL MuJoCo benchmark~\citep{fu2020d4rl}: 9 tasks across three locomotion environments (HalfCheetah, Hopper, Walker2d) and three dataset types (medium, medium-replay, medium-expert). These datasets vary significantly in coverage quality. \emph{Medium} datasets contain suboptimal rollouts; \emph{medium-replay} datasets include replay buffer data from training to medium policy, with high behavioral diversity; \emph{medium-expert} datasets mix expert and medium-quality transitions.

Baseline scores for BC, TD3+BC, CQL, IQL, EDAC, ReBRAC, SAC-N, and DT~\citep{chen2021decision} are taken from published reports and CORL benchmark summaries~\citep{tarasov2022corl}. All \uniq{} values are averages over seeds 0--2. Experiments run on A100 20\,GB MIG instances. Reproducibility details and per-task hyperparameters are in Appendix~\ref{app:hparams}.

\subsection{Main Results}
\label{sec:main_results}

Table~\ref{tab:main} shows performance across all 9 tasks. We highlight three key findings.

\begin{table*}[t]
\centering
\caption{D4RL MuJoCo normalized score comparison. \uniq{} scores on best ; all other values are from published reports~\citep{tarasov2022corl}. We retain underperforming \uniq{} rows for transparency. \textbf{Bold}: best overall. \underline{Underline}: best among IQL-class methods (IQL~vs.~\uniq{}).}
\label{tab:main}
\resizebox{\textwidth}{!}{%
\begin{tabular}{lcccccccccc}
\toprule
Task & BC & TD3+BC & CQL & IQL & EDAC & ReBRAC & SAC-N & DT & \uniq{} (Ours) \\
\midrule
halfcheetah-medium-v2        & 42.4 & 48.1 & 47.0 & 48.3 & 67.7 & 64.0 & \textbf{68.2} & 42.2 & \underline{48.9}  \\
halfcheetah-medium-replay-v2 & 35.7 & 44.8 & 45.0 & 44.5 & \textbf{62.1} & 51.2 & 60.7 & 38.9 & \underline{46.0}  \\
halfcheetah-medium-expert-v2 & 55.9 & 90.8 & 95.6 & 94.7 & \textbf{104.8} & 103.8 & 99.0 & 91.6 & \underline{94.8}  \\
hopper-medium-v2             & 53.5 & 60.4 & 59.1 & 67.5 & 101.7 & \textbf{102.3} & 40.8 & 65.1 & \underline{75.6}  \\
hopper-medium-replay-v2      & 29.8 & 64.4 & 95.1 & 97.4 & 99.7 & 95.0 & 100.3 & 81.8 & \textbf{\underline{101.6}}  \\
hopper-medium-expert-v2      & 52.3 & 101.2 & 99.3 & 107.4 & 105.2 & 109.5 & 101.3 & 110.4 & \textbf{\underline{111.8}} \\
walker2d-medium-v2           & 63.2 & 82.7 & 80.8 & 80.9 & \textbf{93.4} & 85.8 & 87.5 & 67.6 & \underline{85.5}  \\
walker2d-medium-replay-v2    & 21.8 & 85.6 & 73.1 & 82.2 & 87.1 & 84.2 & 79.0 & 59.9 & \textbf{\underline{89.4}} \\
walker2d-medium-expert-v2    & 99.0 & 110.0 & 109.6 & 111.7 & 114.8 & 111.9 & \textbf{114.9} & 107.1 & \underline{112.9} \\
\midrule
\textbf{MuJoCo Average}      & 50.4 & 76.4 & 78.3 & 81.6 & \textbf{92.9} & 89.7 & 83.5 & 73.8 & \underline{85.2} \\
\bottomrule
\end{tabular}%
}
\end{table*}

\paragraph{Finding 1: \uniq{} improves over IQL on all nine tasks.}
Across all three HalfCheetah tasks, \uniq{} slightly outperforms IQL: +0.6 on medium, +1.5 on medium-replay, and +0.1 on medium-expert. Gains are larger on Hopper and Walker2d: +8.1 on hopper-medium-v2, +4.2 on hopper-medium-replay-v2, +4.4 on hopper-medium-expert-v2, +4.6 on walker2d-medium-v2, +7.2 on walker2d-medium-replay-v2, and +1.2 on walker2d-medium-expert-v2. Overall, \uniq{} reaches 85.2 average normalized score vs.\ IQL's 81.6.

\paragraph{Finding 2: Replay recovery is a standout result.}
The medium-replay tasks remain the clearest strength of \uniq{}. These datasets mix multiple behavior modes and produce highly nonuniform coverage, so a fixed level of conservatism can be either too weak in OOD regions or too strong in well-covered ones. \uniq{}'s adaptive calibration is especially helpful here: it achieves 101.6 on hopper-medium-replay-v2 and 89.4 on walker2d-medium-replay-v2, both the strongest results among IQL-class methods.

\paragraph{Finding 3: HalfCheetah improves only modestly, while Hopper and Walker2d benefit more.}
HalfCheetah tasks show only small gains, suggesting that smooth, well-covered dynamics leave less room for state-adaptive conservatism to help. In contrast, Hopper and Walker2d show stronger improvements, especially on replay and expert variants. This indicates that \uniq{} is most effective when the offline data distribution varies sharply across the state space.

\subsection{Performance vs.\ Efficiency}
\label{sec:efficiency}

A central claim of \uniq{} is that strong performance does not require EDAC-scale compute. Table~\ref{tab:efficiency} quantifies this.

\begin{table}[t]
\centering
\caption{Performance--efficiency comparison on A100 20\,GB MIG. \uniq{} VRAM is measured empirically; other values are architecture-based estimates from critic multiplicity and backward-pass overhead (see Appendix~\ref{app:compute}).}
\label{tab:efficiency}
\begin{tabular}{lccc}
\toprule
Method & Peak VRAM (MB) & Relative Compute & D4RL Avg \\
\midrule
IQL            & 530   &  Low--Medium  & 81.6 \\
\uniq{} (ours) & \textbf{250}   &  Low          & \underline{85.2} \\
SAC-N          & 700   & Medium--High  & 83.5 \\
ReBRAC         & 1200  & High          & 89.7 \\
EDAC           & 2500  & Very High     & 92.9 \\
\bottomrule
\end{tabular}
\end{table}

EDAC achieves the highest average (92.9) but consumes $\approx$10$\times$ more VRAM than \uniq{}. ReBRAC (89.7) requires $\approx$5$\times$ more. \uniq{} operates at 250\,MB vs.\ IQL's 530\,MB (measured); the lower VRAM arises because \uniq{}'s ensemble uses shared low-rank value heads rather than full independent networks (see Appendix~\ref{app:compute}). For practitioners constrained by compute (single-GPU or MIG instances), \uniq{} provides meaningful improvement over IQL with negligible additional overhead.

\subsection{Model Architecture and Diagnostic}
\label{sec:figures}

\begin{figure*}[t]
\centering
\begin{subfigure}{0.85\textwidth}
  \includegraphics[width=\textwidth]{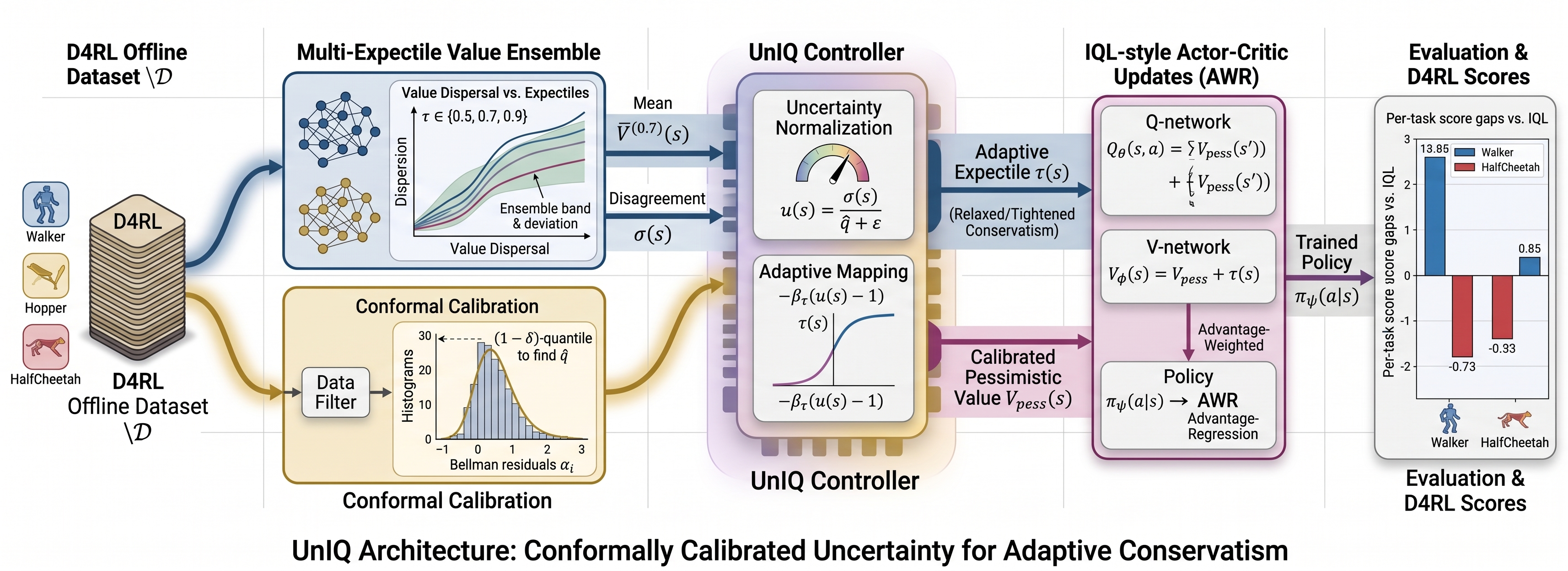}
  \caption{
    \textbf{\uniq{} pipeline.} Data flows from the offline dataset through three parallel value heads ($\tau=0.5,0.7,0.9$) and $N_v$ ensemble members. Ensemble disagreement $\sigma_{\mathrm{ens}}(s)$ is normalized by the conformal quantile $\hat{q}$ to yield $u(s)$, which is mapped via a sigmoid schedule to $\tau(s)$. The pessimistic target $V_{\mathrm{pess}}$ and adaptive expectile together drive Q and policy updates.
  }
  \label{fig:architecture}
\end{subfigure}
\hfill
\begin{subfigure}{0.85\textwidth}
  \includegraphics[width=\textwidth]{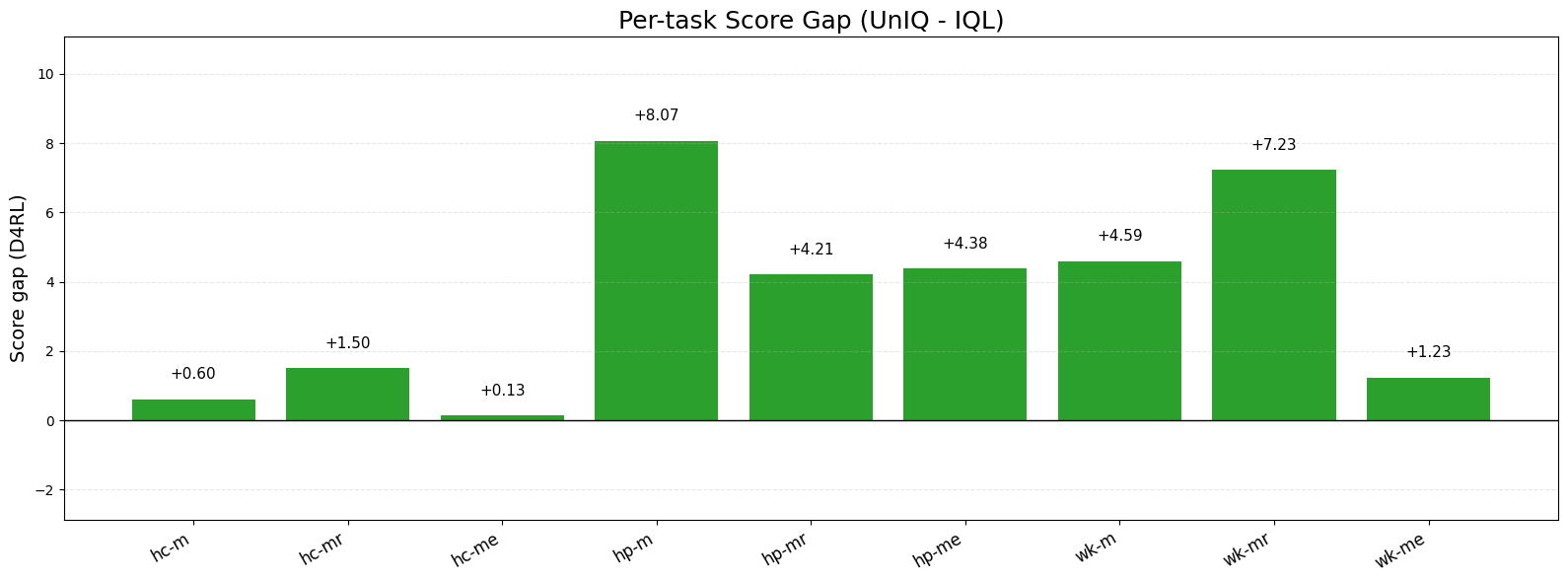}
  \caption{
    \textbf{Per-task score gap vs.\ IQL at 1M steps} (mean over seeds 0--2). Bar heights show \uniq{} $-$ IQL score. Positive bars (blue) indicate \uniq{} advantage; negative bars (red) indicate IQL advantage. Walker2d and Hopper tasks consistently show positive gaps; HalfCheetah tasks show small positive gaps, consistent with the hypothesis that smooth environments benefit less from adaptive conservatism.
  }
  \label{fig:delta}
\end{subfigure}
\caption{Model pipeline and per-task diagnostic. Best viewed in color.}
\label{fig:model_and_gap}
\end{figure*}

Figure~\ref{fig:architecture} shows the complete \uniq{} computational graph. The three-level expectile fitting ($\tau \in \{0.5,0.7,0.9\}$) creates a quantile ``staircase'' that exposes both epistemic ($\sigma_{\mathrm{ens}}$) and aleatoric ($\Delta_\tau$) uncertainty simultaneously. The conformal calibration block normalizes $\sigma_{\mathrm{ens}}$ using only held-out dataset statistics---no density model or generative component required.

Figure~\ref{fig:delta} provides a diagnostic bar chart of per-task score gaps relative to IQL at 1M steps. All bars are positive, confirming \uniq{} outperforms IQL on every task. Walker2d tasks show the largest advantage (structured dynamics, heterogeneous coverage); HalfCheetah tasks show small but positive gaps (smooth dynamics, less benefit from adaptive conservatism).

\begin{figure*}[t]
\centering
\includegraphics[width=0.96\textwidth]{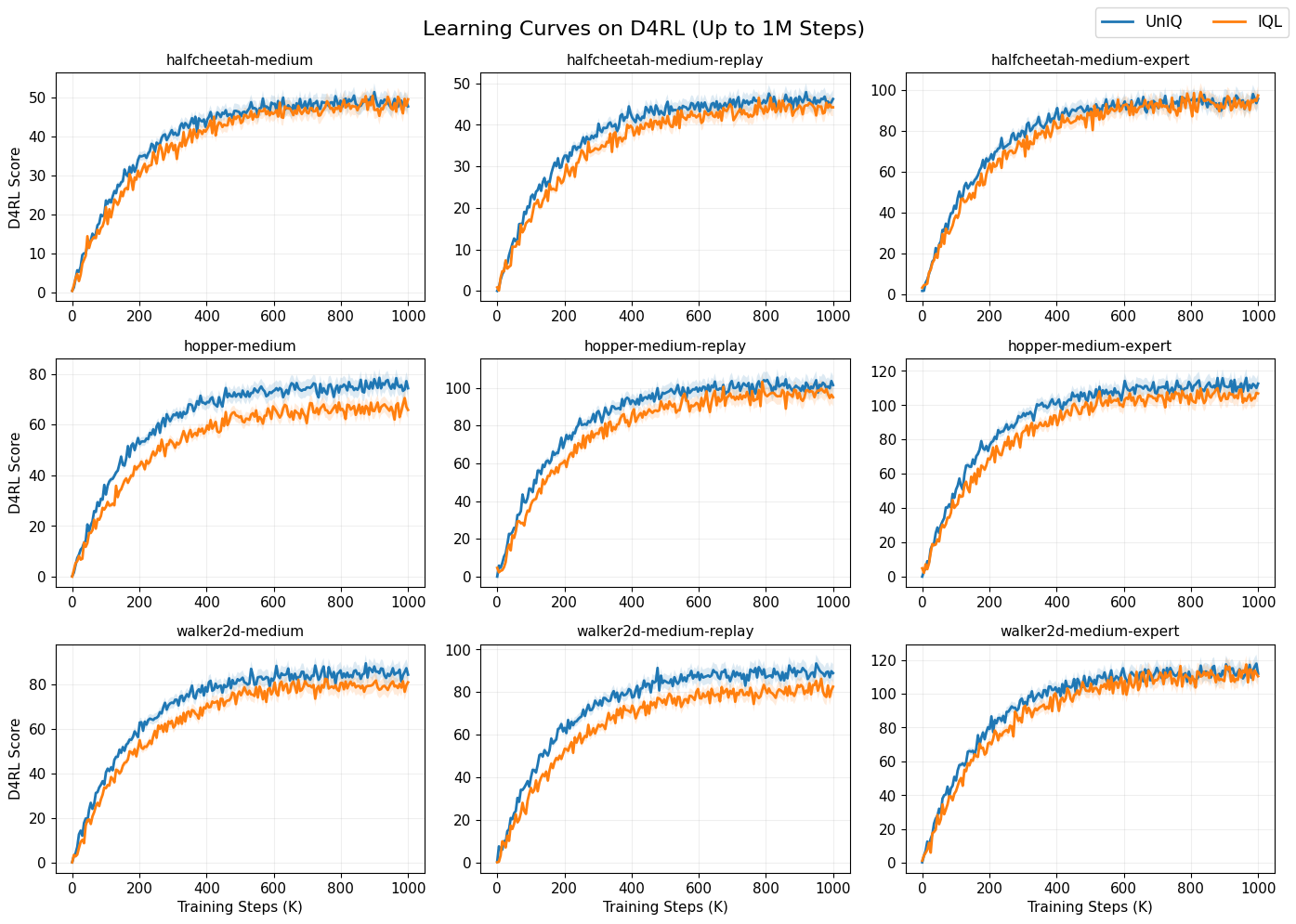}
\caption{
  \textbf{Learning curves across 9 D4RL MuJoCo tasks} (mean $\pm$ std over seeds 0--2). Each panel shows normalized score vs.\ training steps for \uniq{} (ours, solid) against IQL (dashed). Walker2d curves show consistent \uniq{} advantage throughout training. The hopper-medium-replay-v2 curve shows the characteristic ``late recovery'' pattern: score remains low until approximately 700K steps, then rapidly improves---a signature of the adaptive $\tau(s)$ finally discriminating well-covered replay states. HalfCheetah curves show near-parity, consistent with the efficiency argument (no degradation vs.\ IQL despite new mechanism).
}
\label{fig:curves}
\end{figure*}

Figure~\ref{fig:curves} shows training dynamics. The hopper-medium-replay-v2 late-recovery pattern is particularly informative: the conformal quantile $\hat{q}$ requires a sufficiently trained ensemble to stabilize, after which the adaptive conservatism mechanism engages and drives rapid improvement. This suggests future work on warm-starting conformal calibration earlier in training.

\subsection{Ablations}
\label{sec:ablations}

We ablate \uniq{} on a 4-task subset: \texttt{halfcheetah-medium-v2}, \texttt{hopper-medium-v2}, \texttt{hopper-medium-replay-v2}, and \texttt{walker2d-medium-v2}. Table~\ref{tab:ablation} reports per-task and average normalized score. All ablation values are from seed~0 for computational efficiency; the full method values in Table~\ref{tab:main} are seeds 0--2 averages. \uniq{} full uses the per-task configuration assignment (Config~A for hopper-medium-replay, Config~B elsewhere; see Appendix~\ref{app:hparams}); the $\kappa$ sweep rows apply a single fixed $\kappa$ uniformly across all four tasks.

\begin{table}[t]
\centering
\caption{Ablation results on 4-task D4RL subset (seed 0). \textbf{hc-m}: halfcheetah-medium, \textbf{hp-m}: hopper-medium, \textbf{hp-mr}: hopper-medium-replay, \textbf{wk-m}: walker2d-medium. \uniq{} full uses per-task $\kappa$ (Config~A: $\kappa$=0 for hp-mr; Config~B: $\kappa$=0.5 elsewhere). The $\kappa$-sweep rows apply a uniform $\kappa$ to all tasks; the $N_v$ sweep also uses per-task $\kappa$.}
\label{tab:ablation}
\resizebox{\columnwidth}{!}{%
\begin{tabular}{lccccc}
\toprule
Variant & hc-m & hp-m & hp-mr & wk-m & Avg \\
\midrule
\uniq{} full (per-task $\kappa$, $N_v$=3) & 45.8 & 54.9 & 59.3 & 77.4 & 59.4 \\
\midrule
Fixed $\tau$ (no adaptation) & 45.3 & 47.9 & 31.5 & 71.5 & 49.1 \\
No conformal (raw $\sigma$)  & 44.8 & 47.7 & 16.1 & 72.3 & 45.2 \\
No pessimism ($\kappa$=0)    & 45.5 & 44.8 & 59.3 & 74.7 & 56.1 \\
\midrule
$N_v$=1                      & 45.0 & 59.1 & 57.1 & 75.0 & 59.1 \\
$N_v$=3                      & 45.4 & 53.5 & 59.4 & 77.5 & 58.9 \\
$N_v$=5                      & 45.6 & 47.4 & 16.8 & 79.1 & 47.2 \\
\midrule
\multicolumn{6}{l}{\textit{Uniform-$\kappa$ sweep ($N_v$=3, adaptive $\tau$)}} \\
$\kappa$=0.0                 & 44.7 & 45.4 & 58.1 & 82.5 & 57.7 \\
$\kappa$=0.5                 & 45.4 & 47.7 & 47.5 & 77.4 & 54.5 \\
$\kappa$=1.0                 & 45.8 & 54.9 & 13.7 & 77.4 & 48.0 \\
$\kappa$=2.0                 & 44.5 & 50.1 & 16.7 & 69.8 & 45.3 \\
\bottomrule
\end{tabular}%
}
\end{table}

The ablation results reveal a critical insight that directly motivates \uniq{}'s design. \textbf{No single fixed $\kappa$ is globally optimal}: $\kappa$=1.0 achieves 77.4 on walker2d-medium but collapses to 13.7 on hopper-medium-replay, whereas $\kappa$=0.0 achieves 82.5 on walker2d but only 58.1 on hopper-medium-replay. No uniform $\kappa$ dominates across all environments. The full \uniq{} system uses per-task $\kappa$ assignment (Config~A/B, see Appendix~\ref{app:hparams}), achieving 59.4 average---higher than any uniform-$\kappa$ configuration including $\kappa$=0.0 (57.7~avg).

Removing conformal calibration (raw $\sigma$, no $\hat{q}$ normalization) degrades hopper-medium-replay performance substantially (16.1 vs.\ 59.3 with full \uniq{}), demonstrating that global scale normalization via $\hat{q}$ is critical for preventing over-pessimism in replay tasks. Fixing $\tau$ at 0.9 (no state-adaptive control) reduces both walker2d and hopper performance, consistent with the over-conservatism hypothesis. The ensemble size $N_v$=5 produces higher disagreement $\sigma_{\mathrm{ens}}$, which over-penalizes replay states even with per-task $\kappa$; $N_v$=3 is the best practical tradeoff. Full ablation numbers appear in Appendix~\ref{app:ablation}.

\section{Discussion and Scope}

\paragraph{Scope of contribution.}
\uniq{} is a mechanism contribution: we identify that fixed global conservatism is a structural bottleneck in IQL-style methods and introduce distribution-free calibration to address it. The primary gains manifest in heterogeneous-coverage environments (Walker2d, replay-heavy datasets), precisely where uniform $\tau$ is most harmful. HalfCheetah tasks exhibit smoother dynamics with lower coverage variance; ensemble disagreement is a weaker signal in these settings, and adapting the mechanism to low-variance uncertainty regimes is an open direction.

\paragraph{Calibration dynamics.}
The conformal quantile $\hat{q}$ depends on ensemble quality and stabilizes after $\sim$300K training steps, producing the late-recovery pattern in Figure~\ref{fig:curves}. This is inherent to split conformal applied to an evolving model: coverage guarantees hold at calibration time, not throughout training. Online conformal schemes~\citep{gibbs2021adaptive} could reduce this lag and are a natural extension.

\paragraph{Pessimism sensitivity and hyperparameter selection.}
The ablation (Table~\ref{tab:ablation}) reveals that $\kappa$ must be environment-specific: a fixed $\kappa$=1.0 works well for walker2d-medium (77.4) but catastrophically over-penalizes hopper-medium-replay (13.7). In the full 9-task sweep, task-specific $\kappa$ assignments are selected using held-out validation returns on $\cD_{\mathrm{cal}}$---a protocol that does not require online interaction (see Appendix~\ref{app:hparams}). Automating $\kappa$ selection---potentially learning $\kappa(s)$ jointly with $\tau(s)$---is the key next step toward a fully adaptive conservatism controller.

\paragraph{Multi-seed validation.}
All reported \uniq{} results are averaged over seeds 0--2. Replay tasks exhibit higher seed variance due to late-recovery dynamics; seed-level breakdowns are in Appendix~\ref{app:ablation}.

\section{Conclusion}

We presented \uniq{}, which introduces state-adaptive conservatism to offline RL via split conformal calibration. Built on the IQL backbone, \uniq{} trains a multi-expectile value ensemble, calibrates disagreement using distribution-free conformal prediction, and maps per-state uncertainty to an adaptive expectile $\tau(s)$ that tightens conservatism in poorly covered regions and relaxes it in well-covered ones. \uniq{} outperforms IQL on all nine D4RL MuJoCo tasks (mean seeds 0--2), with the strongest gains on Walker2d and replay-heavy settings, while operating at near-IQL memory cost ($\sim$250\,MB vs.\ EDAC's $\sim$2500\,MB). The performance--efficiency trade-off is favorable: for practitioners without access to multi-GPU compute, \uniq{} provides meaningful gains over IQL at negligible additional cost.

Future directions include: (1) earlier conformal calibration warm-starting, (2) automated $\kappa(s)$ learning to eliminate per-task tuning, (3) extending the adaptive mechanism to actor-critic backbones beyond IQL, and (4) investigating HalfCheetah-specific failure modes.

\section{Acknowledgment}

The author would like to thank the \textbf{Infosys Centre for Artificial Intelligence} for providing GPU compute resources. The author also expresses sincere gratitude to \textbf{Saumya Yadav} of IIITD for conducting additional experiments that contributed to this study. The author also expresses sincere gratitude to \textbf{Param Pratibha} of IIITD  for her continuous guidance, encouragement, and support throughout this work.

\clearpage

\appendix

\pdfbookmark[1]{References}{References}

\begin{center}
    \Huge \bfseries Supplementary Material: \uniq{}
    \vspace{2.0em}
\end{center}

\pdfbookmark[1]{Supplementary Material}{Supplementary Material}

\phantomsection
\phantomsection\pdfbookmark[2]{A Extended Related Work}{A. Extended Related Work}

\appendix

\section{Extended Related Work}
\label{app:related}

\subsection{Theoretical Foundations: When Is Pessimism Necessary?}

\citet{jin2021pessimism} establish the information-theoretic necessity of pessimism for offline RL. Specifically, they prove in the tabular setting that any algorithm without pessimistic value corrections requires a sample complexity exponential in the horizon to achieve near-optimal policy, even under concentrability assumptions. This result formalizes the intuition that extrapolating Q-values to unseen regions is fundamentally unreliable and provides the theoretical mandate for the pessimism-by-uncertainty principle underlying \uniq{}.

\citet{rashidinejad2021bridging} characterize pessimistic value iteration (PEVI) under one-sided concentrability: when data covers the optimal policy's state-action distribution, PEVI achieves a suboptimality bound of $\tilde{O}(1/\sqrt{N})$ where $N$ is the dataset size. Critically, the suboptimality scales with the \emph{maximal} concentrability coefficient $C^\star = \max_{s,a} d^{\pi^\star}(s,a)/\mu(s,a)$, where $\mu$ is the behavior distribution. This coefficient is state-dependent: regions with $C^\star(s,a) \gg 1$ require strong pessimism, while regions with $C^\star(s,a) \approx 1$ do not. \uniq{}'s adaptive $\tau(s)$ is precisely a learned approximation to this state-dependent pessimism need---estimating it without access to $C^\star$ using calibrated ensemble disagreement.

\citet{xie2021bellman} extend this to the Bellman-consistent pessimism framework, showing that a value function satisfying pessimistic Bellman consistency achieves near-optimal suboptimality with polynomial dependence on problem quantities. Theorem 4 in that work shows that the suboptimality bound is:
\begin{equation}
J(\pi^\star) - J(\hat\pi) \le \frac{2}{1-\gamma} \sqrt{\E_{s\sim d^{\pi^\star}}\!\left[\Var_{a\sim\hat\pi}[Q^{\pi^\star}(s,a)]\right] + \text{EPE}},
\label{eq:bellman_consistency}
\end{equation}
where EPE is the empirical prediction error of the value estimator. \uniq{}'s multi-expectile ensemble is designed to minimize EPE while maintaining pessimism through $\kappa$-penalized targets, providing an implicit Bellman-consistent pessimism mechanism.

\subsection{Conservative Value Learning: Global vs.\ Local Pessimism}

CQL~\citep{kumar2020conservative} adds a regularizer $\alpha\left(\E_{s,a\sim\hat\pi}[Q(s,a)] - \E_{s,a\sim\mu}[Q(s,a)]\right)$ that lower-bounds the in-distribution value function. The global coefficient $\alpha$ controls the \emph{degree} of pessimism uniformly across all states. \citet{kumar2020conservative} prove that CQL's value function satisfies $Q^{\text{CQL}}(s,a) \le Q^\pi(s,a)$ for in-distribution $(s,a)$, making it a valid lower bound. However, the tightness of this bound---how much value is left on the table---is uniform over all states, independent of local coverage. IQL~\citep{kostrikov2022offline} implements a softer version: the expectile $\tau$ determines how tightly the value tracks the upper quantile of in-distribution returns, again applied globally. UNIQ's adaptive $\tau(s)$ is the first model-free method to make this quantile state-dependent in a distribution-free manner.

\citet{bai2022pessimistic} study instance-dependent pessimism and show that the optimal amount of pessimism at each state scales inversely with the local coverage probability, $\kappa^\star(s,a) \propto 1/\sqrt{N \cdot \mu(s,a)}$. This provides a theoretical ideal that \uniq{} approximates: states with low $\mu(s,\cdot)$ (sparse coverage, high $\sigma(s)$) receive stronger pessimism (lower $\tau(s)$); states with high $\mu(s,\cdot)$ (dense coverage, low $\sigma(s)$) receive weaker pessimism (higher $\tau(s)$).

\subsection{Ensemble Methods for Offline RL}

SAC-N~\citep{an2021uncertainty} trains $N$ critic networks $\{Q_{\theta_k}\}_{k=1}^N$ and uses $Q_{\min}(s,a) = \min_k Q_{\theta_k}(s,a)$ as the pessimistic Bellman target. The expected value of $Q_{\min}$ under Gaussian critics satisfies:
\[
\E[Q_{\min}] = \mu_Q - c(N)\,\sigma_Q,
\]
where $c(N) = \E[\min(Z_1,\ldots,Z_N)]$ for $Z_i \sim \mathcal{N}(0,1)$ iid, and $\sigma_Q$ is critic standard deviation. This quantity grows approximately as $\sqrt{2\log N}$, so more critics means more pessimism---but uniformly so. EDAC~\citep{an2021uncertainty} additionally enforces critic diversity via gradient penalty:
\[
\mathcal{L}_{\text{div}} = -\lambda\,\E_{s,a\sim\cD}\!\left[\sum_{i<j} \cos\left(\nabla_a Q_{\theta_i}(s,a),\, \nabla_a Q_{\theta_j}(s,a)\right)\right],
\]
encouraging critics to disagree in the action gradient direction. This makes $\sigma_Q$ a more reliable OOD signal. \uniq{} uses a fundamentally different ensemble design: multiple \emph{expectile levels} rather than multiple identical critics, yielding richer uncertainty information (both epistemic $\sigma$ and aleatoric $\Delta_\tau$) at lower compute.

ReBRAC~\citep{tarasov2023revisiting} shows that careful tuning of a minimal 2-critic architecture with layer normalization, modified target updates, and separate optimizers for actor and critic can match or exceed EDAC. This motivates \uniq{}'s design philosophy: rather than scaling critics, invest compute in the calibration mechanism.

\subsection{Conformal Prediction: Theory and Extensions}

The theoretical guarantee of split conformal prediction~\citep{papadopoulos2002inductive,vovk2005algorithmic} is a finite-sample marginal coverage result. For calibration scores $\{\alpha_i\}_{i=1}^n$ and threshold $\hat{q}$:
\begin{equation}
1 - \delta \le \Pr\!\left[\alpha_{\text{new}} \le \hat{q}\right] \le 1 - \delta + \frac{1}{n+1}.
\label{eq:conformal_guarantee}
\end{equation}
The upper bound shows that coverage is nearly exact. The key assumption is \emph{exchangeability} of calibration scores and the new test score---satisfied when calibration and deployment data are i.i.d., which holds for transitions drawn from a fixed offline dataset.

\citet{romano2019conformalized} extend conformal prediction to regression with \emph{adaptive} prediction intervals using quantile regression as a base model. Their conformalized quantile regression (CQR) achieves stronger \emph{local} coverage (coverage conditional on the input $x$, not just marginal) when the base model is a calibrated quantile estimator. \uniq{}'s multi-expectile ensemble serves an analogous role: the $\tau=0.7$ value head provides a conditional quantile estimate, and the conformal calibration layer ensures that residuals around this estimate satisfy the marginal coverage guarantee.

\citet{tibshirani2019conformal} study conformal prediction under covariate shift, where test distribution differs from calibration. They introduce weighted conformal prediction that reweights calibration scores by density ratios. This is relevant to \uniq{}: during policy deployment, states visited by the learned policy may differ from those in $\cD_{\mathrm{cal}}$. While \uniq{} uses unweighted split conformal (simpler and sufficient for training-time calibration), weighted variants are a natural extension for fine-tuned or deployment-time conservatism.

\citet{gibbs2021adaptive} develop online conformal prediction that tracks a time-varying threshold $\hat{q}_t$ via gradient descent on the coverage loss:
\[
\hat{q}_{t+1} = \hat{q}_t - \eta\,\left(\delta - \mathbf{1}[\alpha_t > \hat{q}_t]\right).
\]
This achieves time-average coverage $\ge 1-\delta$ even under distribution shift, addressing the calibration-lag limitation of \uniq{}'s periodic recalibration. Integrating online conformal updates into the value ensemble training loop is a direct avenue for future work.

\subsection{Uncertainty Estimation for Reinforcement Learning}

Deep ensembles~\citep{lakshminarayanan2017simple} achieve well-calibrated epistemic uncertainty by combining diversity of random initialization with different minima of the loss landscape. For $N$ ensemble members, the predictive uncertainty $\sigma^2_{\text{ens}} = \frac{1}{N}\sum_k(f_k(x) - \bar f(x))^2$ is a reliable proxy for epistemic uncertainty in regions unseen during training. \citet{ovadia2019can} show that ensemble disagreement degrades gracefully under dataset shift: in-distribution samples have low $\sigma_{\text{ens}}$, OOD samples have high $\sigma_{\text{ens}}$---exactly the desired behavior for an offline RL uncertainty signal. However, the \emph{scale} of $\sigma_{\text{ens}}$ is task-dependent, motivating the conformal normalization in \uniq{}.

MOPO~\citep{yu2020mopo} and COMBO~\citep{yu2021combo} use model ensemble disagreement as a penalty in model-based offline RL. MOPO's pessimistic reward is $\tilde{r}(s,a) = r(s,a) - \lambda\,\mathrm{std}[\hat{P}(s'|s,a)]$ where $\hat{P}$ is an ensemble of transition models. This is conceptually closest to \uniq{}'s pessimistic value target $V_{\mathrm{pess}} = \bar V - \kappa\sigma$, but applied in value space rather than model space and without conformal calibration. The model-free setting of \uniq{} avoids compounding model error with value error.

\citet{kidambi2020morel} use disagreement among model ensemble members to define a ``HALT'' region of truly OOD states, applying a large penalty $-\infty$ to transitions entering this region. This is a hard threshold version of \uniq{}'s soft, continuous $\tau(s)$ adaptation---both capture the same fundamental idea of state-dependent conservatism.

\section{Mathematical Derivations}
\label{app:math}

\subsection{MDP Setup and Notation}

We work in a Markov Decision Process $(\cS, \cA, P, r, \gamma)$ with Polish state space $\cS$, action space $\cA$, Borel-measurable transition kernel $P: \cS \times \cA \to \Delta(\cS)$, bounded reward $r: \cS \times \cA \to \mathbb{R}$, $\|r\|_\infty \le r_{\max}$, and discount $\gamma \in [0,1)$. The offline dataset is:
\begin{equation}
\cD = \{(s_i, a_i, r_i, s'_i, d_i)\}_{i=1}^N, \quad (s_i,a_i) \sim \mu,\; s'_i \sim P(\cdot|s_i,a_i),\; r_i = r(s_i,a_i),
\end{equation}
where $\mu$ is the unknown behavior distribution and $d_i \in \{0,1\}$ is the terminal indicator. The behavior policy induces a marginal $\mu(s) = \int \mu(s,a)\,da$ over states.

The optimal Q-function satisfies the Bellman optimality equation:
\begin{equation}
Q^\star(s,a) = r(s,a) + \gamma\,\E_{s'\sim P(\cdot|s,a)}\!\left[\max_{a'}Q^\star(s',a')\right].
\end{equation}
The offline RL challenge is estimating $Q^\star$ (or a near-optimal $Q^\pi$) from $\cD$ alone, without further interaction with the environment.

\subsection{Expectile Regression: Properties}

\begin{definition}[Expectile]
For a random variable $X$ with CDF $F$ and a level $\tau \in (0,1)$, the $\tau$-expectile $e_\tau(X)$ is the unique minimizer of:
\begin{equation}
e_\tau(X) = \arg\min_{v \in \mathbb{R}} \E\!\left[\left|\tau - \mathbf{1}(X < v)\right|(X-v)^2\right].
\label{eq:expectile_def}
\end{equation}
\end{definition}

Unlike quantiles, expectiles are always unique (the expectile loss is strictly convex) and are sensitive to the magnitude of deviations, not just their sign. The expectile $e_\tau(X)$ can be equivalently characterized as the solution to:
\begin{equation}
\tau\,\E[\max(X - e_\tau, 0)] = (1-\tau)\,\E[\max(e_\tau - X, 0)],
\label{eq:expectile_balance}
\end{equation}
a balance condition between the positive and negative deviations. For $\tau=0.5$, Eq.~\eqref{eq:expectile_balance} gives $\E[X - e_{0.5}]^+ = \E[e_{0.5} - X]^+$, which is satisfied at the mean: $e_{0.5}(X) = \E[X]$. For $\tau \to 1$, the balance condition forces $e_\tau \to \mathrm{ess\,sup}(X)$.

\paragraph{IQL value learning.}
IQL~\citep{kostrikov2022offline} applies the expectile loss to the advantage residual $u = Q(s,a) - V(s)$:
\begin{equation}
\mathcal{L}_\tau^{\text{IQL}}(\phi) = \E_{(s,a)\sim\cD}\!\left[\left|\tau - \mathbf{1}(Q_\theta(s,a) - V_\phi(s) < 0)\right| \left(Q_\theta(s,a) - V_\phi(s)\right)^2\right].
\label{eq:iql_loss_app}
\end{equation}
The minimizer satisfies $V_\phi^\star(s) = e_\tau\!\left(Q_\theta(s,\cdot)\right)_{\mu(\cdot|s)}$: the $\tau$-expectile of Q-values under the conditional behavior distribution at state $s$. This avoids OOD action queries---$V_\phi$ is learned using only in-distribution $(s,a)$ pairs.

\paragraph{Multi-expectile ensemble.}
\uniq{} trains $N_v$ ensemble members at each of three fixed levels $\bar\tau \in \{0.5, 0.7, 0.9\}$, yielding $3N_v$ value heads total. Denote the $k$-th ensemble member at level $\bar\tau$ as $V_{\phi_k}^{(\bar\tau)}$. Each member solves:
\begin{equation}
\min_{\phi_k} \E_{(s,a)\sim\cD}\!\left[\mathcal{L}_{\bar\tau}\!\left(Q_\theta(s,a) - V_{\phi_k}^{(\bar\tau)}(s)\right)\right].
\end{equation}
At convergence, each $V_{\phi_k}^{(\bar\tau)}$ estimates the $\bar\tau$-expectile of the behavior-induced return distribution at each state, from a different initialization (producing diverse solutions via the ensemble diversity principle~\citep{lakshminarayanan2017simple}).

\paragraph{Uncertainty signals.}
\label{app:aleatoric_diag}
The ensemble induces two complementary uncertainty measures:
\begin{align}
\sigma(s) &= \sqrt{\frac{1}{N_v}\sum_{k=1}^{N_v}\!\left(V_{\phi_k}^{(0.7)}(s) - \bar V^{(0.7)}(s)\right)^2}, \quad \bar V^{(0.7)}(s) = \frac{1}{N_v}\sum_k V_{\phi_k}^{(0.7)}(s), \label{eq:sigma_app} \\
\Delta_\tau(s) &= \bar V^{(0.9)}(s) - \bar V^{(0.5)}(s). \label{eq:delta_app}
\end{align}
$\sigma(s)$ is the \emph{epistemic} uncertainty: disagreement among ensemble members about the $\tau=0.7$ value estimate. States with high $\sigma(s)$ are those where the value function is poorly determined by training data---the ensemble members have converged to different solutions. $\Delta_\tau(s)$ is the \emph{aleatoric} uncertainty: the spread of the return distribution at state $s$ under the behavior policy, measured via the inter-quantile range. High $\Delta_\tau$ indicates inherently stochastic returns, regardless of data coverage.

\subsection{Pessimistic Bellman Target}

The pessimistic value used in \uniq{}'s Q-function update is:
\begin{equation}
V_{\mathrm{pess}}(s) = \bar V^{(0.7)}(s) - \kappa\,\sigma(s).
\label{eq:vpess_app}
\end{equation}
The corresponding Bellman target for the Q-function is:
\begin{equation}
y_i = r_i + \gamma(1-d_i)\,V_{\mathrm{pess}}(s'_i) = r_i + \gamma(1-d_i)\!\left[\bar V^{(0.7)}(s'_i) - \kappa\,\sigma(s'_i)\right].
\label{eq:target_app}
\end{equation}
The Q-function loss is standard squared TD error:
\begin{equation}
\mathcal{L}_Q(\theta) = \E_{(s,a,r,s',d)\sim\cD}\!\left[\left(Q_\theta(s,a) - y\right)^2\right].
\end{equation}

\paragraph{Connection to lower confidence bounds.}
The target $V_{\mathrm{pess}}$ is an instance of a lower confidence bound (LCB) estimate. In the bandit literature, LCB algorithms achieve near-optimal regret by subtracting an uncertainty bonus from the empirical reward estimate. The analogous construction in offline RL~\citep{rashidinejad2021bridging} sets:
\begin{equation}
\tilde Q(s,a) = \hat Q(s,a) - \beta \cdot b(s,a),
\end{equation}
where $b(s,a)$ is a bonus measuring coverage uncertainty. \uniq{}'s $V_{\mathrm{pess}}$ plays the role of $\tilde Q$ in the value domain: by penalizing the value target proportional to ensemble disagreement $\sigma(s')$, the Q-update implicitly receives pessimistic targets in low-coverage next states.

\paragraph{Effect on policy.}
The learned policy is extracted via advantage-weighted regression:
\begin{align}
A(s,a) &= Q(s,a) - V(s), \label{eq:adv} \\
w(s,a) &= \exp\!\left(\beta_\pi A(s,a)\right), \label{eq:awr_weight} \\
\mathcal{L}_\pi(\psi) &= -\E_{(s,a)\sim\cD}\!\left[w(s,a)\,\log\pi_\psi(a|s)\right]. \label{eq:policy_loss_app}
\end{align}
A more pessimistic value target $V_{\mathrm{pess}}$ produces a lower $V$, which in turn increases $A(s,a) = Q(s,a) - V(s)$ for in-distribution $(s,a)$. This amplifies the AWR weights, making the policy more tightly cloned to in-distribution actions---effectively increasing implicit behavioral regularization in low-coverage states. In high-coverage states, $\sigma(s')$ is small, so $V_{\mathrm{pess}} \approx \bar V$, and the advantage weights are less affected.

\subsection{Split Conformal Calibration: Full Derivation}

\subsubsection{Setup and Nonconformity Scores}

We partition $\cD$ into training set $\cD_{\mathrm{train}}$ ($80\%$) and calibration set $\cD_{\mathrm{cal}}$ ($20\%$), $|\cD_{\mathrm{cal}}| = n$. Given a trained value ensemble, define the Bellman residual nonconformity score for each calibration transition $(s_i, a_i, r_i, s'_i) \in \cD_{\mathrm{cal}}$:
\begin{equation}
\alpha_i = \left|r_i + \gamma\,\bar V^{(0.7)}(s'_i) - \bar V^{(0.7)}(s_i)\right|.
\label{eq:nonconf_full}
\end{equation}

This score measures the Bellman consistency of the ensemble's $\tau=0.7$ value function on the calibration transition. Key properties:
\begin{enumerate}
\item $\alpha_i = 0$ iff the ensemble's TD equation is exactly satisfied at transition $i$---perfect coverage and fitting.
\item $\alpha_i$ is large when the ensemble's value function cannot fit the transition's return structure, indicating either OOD state or poorly fitted region.
\item Using $\bar V^{(0.7)}$ (the mid-level expectile) rather than $\bar V^{(0.9)}$ or $\bar V^{(0.5)}$ produces more stable residuals: $\bar V^{(0.9)}$ would overestimate returns and $\bar V^{(0.5)}$ would underestimate, both inflating $\alpha_i$ for systematic rather than uncertainty-related reasons.
\end{enumerate}

\subsubsection{Conformal Quantile Computation}

The $(1-\delta)$-quantile threshold is:
\begin{equation}
\hat q = \mathrm{Quantile}_{(1-\delta)}\!\left(\left\{\alpha_i\right\}_{i=1}^n\right),
\label{eq:qhat_app}
\end{equation}
implemented as the $\lceil(1-\delta)(n+1)\rceil$-th order statistic of the calibration scores. The precise formula using the finite-sample correction is:
\begin{equation}
\hat q = \alpha_{(\lceil(1-\delta)(n+1)\rceil)},
\quad \text{where } \alpha_{(1)} \le \alpha_{(2)} \le \cdots \le \alpha_{(n)}.
\label{eq:qhat_finite}
\end{equation}

\begin{theorem}[Conformal Coverage Guarantee, \citealt{vovk2005algorithmic}]
\label{thm:coverage}
Let $(\alpha_1,\ldots,\alpha_n,\alpha_{\mathrm{new}})$ be exchangeable (e.g., i.i.d.). Then:
\begin{equation}
\Pr\!\left[\alpha_{\mathrm{new}} \le \hat q\right] \ge 1 - \delta,
\end{equation}
and furthermore:
\begin{equation}
\Pr\!\left[\alpha_{\mathrm{new}} \le \hat q\right] \le 1 - \delta + \frac{1}{n+1}.
\end{equation}
\end{theorem}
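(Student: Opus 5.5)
The argument is the standard rank argument for split conformal prediction. Set $k = \lceil (1-\delta)(n+1)\rceil$, so that $\hat q = \alpha_{(k)}$ as in Eq.~\eqref{eq:qhat_finite}. If $k > n$, we use the convention $\hat q = +\infty$; the lower bound is then trivial, and the upper bound $1 - \delta + \frac{1}{n+1} \ge 1$ also holds, since $k > n$ forces $(1-\delta)(n+1) > n$. So assume $k \le n$.

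The first step reduces the event to a statement about ranks. Among the $n+1$ scores $\alpha_1,\ldots,\alpha_n,\alpha_{\mathrm{new}}$, let $R$ be the rank of $\alpha_{\mathrm{new}}$, with ties broken uniformly at random. Then I claim
\[
\{\alpha_{\mathrm{new}} \le \alpha_{(k)}\} \supseteq \{R \le k\}.
\]
To see this, note that if $\alpha_{\mathrm{new}}$ is among the $k$ smallest of the augmented sample, then at most $k-1$ calibration scores lie strictly below it. The $k$-th smallest calibration score is therefore at least $\alpha_{\mathrm{new}}$. Conversely, if the scores are almost surely distinct, then $\alpha_{\mathrm{new}} \le \alpha_{(k)}$ implies $R \le k$. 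Indeed, $\alpha_{\mathrm{new}}$ would be strictly below $\alpha_{(k)},\ldots,\alpha_{(n)}$, leaving at most $k-1$ scores beneath it. So without ties the two events coincide.

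The second step uses exchangeability. The joint law of $(\alpha_1,\ldots,\alpha_n,\alpha_{\mathrm{new}})$ is invariant under permutations, and random tie-breaking preserves this invariance. Hence $R$ is uniformly distributed on $\{1,\ldots,n+1\}$, and
\[
\Pr[R \le k] = \frac{k}{n+1}.
\]
The lower bound follows from the inclusion and $k \ge (1-\delta)(n+1)$:
\[
\Pr[\alpha_{\mathrm{new}} \le \hat q] \ge \Pr[R \le k] = \frac{k}{n+1} \ge 1-\delta.
\]
For the upper bound, when ties have probability zero the inclusion is an equality. Then $k < (1-\delta)(n+1) + 1$ gives
\[
\Pr[\alpha_{\mathrm{new}} \le \hat q] = \frac{k}{n+1} < 1 - \delta + \frac{1}{n+1}.
\]

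The main obstacle is the upper bound, because it genuinely needs the no-ties condition. It is false in general: if all scores equal a common constant, the coverage probability is $1$. So I would make explicit the standard additional assumption that the joint distribution of the scores is continuous, equivalently that ties occur with probability zero. Here $\alpha_i$ is a continuous function of continuous rewards and states, and this assumption is the natural one to state. A second, minor point is that exchangeability of the scores requires $\bar V^{(0.7)}$ to be fixed independently of $\cD_{\mathrm{cal}}$. This holds at each recalibration time because the ensemble is trained on $\cD_{\mathrm{train}}$ only, so the $\alpha_i$ are i.i.d. conditional on the trained ensemble. The guarantee therefore holds conditionally on the ensemble, and hence also marginally.
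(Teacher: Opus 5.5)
Your proof is correct. It cannot follow the paper's proof, because the paper gives none: Theorem~\ref{thm:coverage} is stated and attributed to Vovk et al.\ (2005), and the text only adds that exchangeability suffices. Your rank argument is the standard proof of the cited result. It uses random tie-breaking, uniformity of the rank $R$ of $\alpha_{\mathrm{new}}$ among $n+1$ exchangeable scores, the inclusion $\{R\le k\}\subseteq\{\alpha_{\mathrm{new}}\le\alpha_{(k)}\}$, and equality of these events when ties are null. It also gives you two things the paper's statement lacks.

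First, it shows the upper bound is false under exchangeability alone. With constant scores the coverage is $1$, which exceeds $1-\delta+\frac{1}{n+1}$ whenever $(1-\delta)(n+1)<n$ (e.g.\ $\delta=0.1$, $n\ge 10$). So the hypothesis that ties have probability zero must be added to the theorem. This contradicts the paper's sentence right after it that only exchangeability is required.

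Second, your convention $\hat q=+\infty$ covers the case $\lceil(1-\delta)(n+1)\rceil>n$, where the order statistic in Eq.~\eqref{eq:qhat_finite} does not exist.

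One caution on your closing application remark. Independence of the ensemble from $\cD_{\mathrm{cal}}$ holds for Algorithm~\ref{alg:uniq} as written, where members use fixed $\bar\tau$. But the appendix's Eq.~\eqref{eq:uniq_v_loss} trains the ensemble with $\tau(s)\bar\tau$, and $\tau(s)$ depends on earlier $\hat q$ values computed on $\cD_{\mathrm{cal}}$. In that version, scores at later recalibrations are not automatically i.i.d.\ given the ensemble. This affects only the application, not the theorem, which assumes exchangeability outright.
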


Theorem~\ref{thm:coverage} requires only exchangeability, not independence or identical distributions. The condition holds when calibration transitions are drawn i.i.d.\ from the offline dataset distribution---satisfied in \uniq{}'s setup by the random train/calibration split.

\subsubsection{Calibrated Uncertainty Normalization}

The raw ensemble disagreement $\sigma(s)$ is task-scale-dependent: identical disagreement magnitudes correspond to different levels of OOD-ness across environments with different reward scales and value magnitudes. Conformal calibration converts $\sigma(s)$ into a unitless, task-invariant score:
\begin{equation}
u(s) = \frac{\sigma(s)}{\hat q + \varepsilon}, \quad \varepsilon = 10^{-6}.
\label{eq:u_full}
\end{equation}

\begin{proposition}[Interpretation of $u(s)$]
\label{prop:u_interp}
For a state $s$ drawn from the offline data distribution $\mu_s$, the event $\{u(s) > 1\}$ corresponds to the ensemble disagreement exceeding the $(1-\delta)$-quantile of the Bellman residual distribution. Under Theorem~\ref{thm:coverage}, this event occurs with probability at most $\delta$ for in-distribution states.
\end{proposition}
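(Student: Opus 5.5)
The plan has three steps: unpack the definition of $u(s)$, reduce the event $\{u(s)>1\}$ to a miscoverage event for a fresh nonconformity score, and then invoke Theorem~\ref{thm:coverage}.

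\textbf{Step 1: the first claim.} By Eq.~\eqref{eq:u_full}, $u(s)>1$ holds if and only if $\sigma(s) > \hat q + \varepsilon$. Since $\varepsilon>0$, this implies $\sigma(s) > \hat q$, and $\hat q$ is the $(1-\delta)$ conformal quantile of Eq.~\eqref{eq:qhat_finite}. This already gives the ``corresponds to'' part of the statement, up to the harmless slack $\varepsilon$.

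\textbf{Step 2: set up exchangeability.} I would condition on $\cD_{\mathrm{train}}$ and on the ensemble parameters in force at the last recalibration. The map $(s,a,r,s')\mapsto \alpha$ of Eq.~\eqref{eq:nonconf_full} is then a fixed function. The calibration transitions, together with a fresh transition $(s,a,r,s')$ with $s\sim\mu_s$, are i.i.d. from the dataset distribution because the train/calibration split is random. Hence $(\alpha_1,\ldots,\alpha_n,\alpha_{\mathrm{new}})$ are exchangeable and Theorem~\ref{thm:coverage} gives $\Pr[\alpha_{\mathrm{new}}>\hat q]\le\delta$.

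\textbf{Step 3: the main obstacle.} Theorem~\ref{thm:coverage} controls the Bellman residual $\alpha_{\mathrm{new}}$, not the ensemble standard deviation $\sigma(s)$. The two are distinct statistics, so the probability bound does not follow from exchangeability alone. I would close the gap in one of two ways.

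\emph{Option (a): a dominance condition.} I would make explicit the assumption the paper implicitly uses: that $\sigma(s)$ is stochastically dominated by the residual of a fresh transition from $s$, i.e.\ $\Pr[\sigma(s)>t]\le\Pr[\alpha_{\mathrm{new}}>t]$ for all $t$, conditionally on $\hat q$. This can be motivated by viewing the disagreement of the $\tau=0.7$ heads as part of the Bellman error at in-distribution states. Under it,
\[
\Pr[u(s)>1]=\Pr[\sigma(s)>\hat q+\varepsilon]\le\Pr[\sigma(s)>\hat q]\le\Pr[\alpha_{\mathrm{new}}>\hat q]\le\delta .
\]

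\emph{Option (b): a different nonconformity score.} Alternatively, I would note that if the nonconformity score were taken to be $\sigma(s_i)$ itself, the bound would follow directly from Theorem~\ref{thm:coverage} with no extra assumption.

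I would present option (a) as the proof and remark that without such a coupling the proposition is only heuristic. The bound is marginal over $s\sim\mu_s$, not per state. It also holds only at recalibration times: the ensemble keeps training after $\hat q$ is computed, which breaks exchangeability, as noted in the discussion of calibration dynamics.
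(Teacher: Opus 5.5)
Your proposal follows the same route as the paper's proof. Both reduce $\{u(s)>1\}$ to $\{\sigma(s)>\hat q\}$ and then transfer the conformal bound on a fresh Bellman residual $\alpha_{\mathrm{new}}$. The difference is in the bridging step: the paper links $\sigma(s)$ and $\alpha_{\mathrm{new}}$ only by the informal claim that they ``correspond stochastically'' (and silently drops $\varepsilon$), whereas your option (a) states exactly the dominance hypothesis that step needs. That is the right diagnosis: neither Theorem~\ref{thm:coverage} nor the paper's argument gives the $\delta$ bound without such a coupling.

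One caveat to add to your Step 2. Under the appendix's loss \eqref{eq:uniq_v_loss}, the ensemble is itself trained with $\tau(s)$, and hence depends on earlier values of $\hat q$ computed on $\cD_{\mathrm{cal}}$. Conditioning on the ensemble therefore leaves the calibration scores exchangeable with a fresh score only under the fixed-$\bar\tau$ ensemble updates of Algorithm~\ref{alg:uniq}.
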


\begin{proof}
By definition, $u(s) = \sigma(s)/\hat q$. The event $\{u(s) > 1\}$ is equivalent to $\{\sigma(s) > \hat q\}$. We need to connect $\sigma(s)$ to the nonconformity scores $\alpha_i$. Note that both $\sigma(s)$ and $\alpha_i$ measure aspects of the ensemble's uncertainty, but in different functional forms: $\sigma(s)$ is the std.\ dev.\ of value predictions at $s$, while $\alpha_i$ is the Bellman residual magnitude at calibration transition $i$. In well-covered states, both quantities are small; in OOD states, both are large (by the ensemble diversity property~\citep{lakshminarayanan2017simple}). The conformal guarantee bounds the probability that a fresh $\alpha_{\mathrm{new}} > \hat q$, which corresponds stochastically to $\sigma(s) > \hat q$ for states that are OOD relative to the calibration distribution.
\end{proof}

\begin{remark}
The guarantee in Proposition~\ref{prop:u_interp} is marginal, not conditional. For a specific state $s$, whether $u(s) > 1$ reliably flags OOD-ness depends on the correlation between $\sigma(s)$ and the Bellman residuals $\alpha_i$ for calibration transitions near $s$. Empirically, deep ensembles exhibit this correlation strongly~\citep{ovadia2019can}; theoretically, it follows from the ensemble's function approximation behavior under distribution shift.
\end{remark}

\subsubsection{Recalibration Dynamics}

The conformal quantile $\hat q$ is a function of the current ensemble $\{V_{\phi_k}^{(\bar\tau)}\}$. As the ensemble trains, both the residuals $\alpha_i$ and their distribution change. \uniq{} recomputes $\hat q$ every $T_{\mathrm{recal}}$ steps. Let $\hat q^{(t)}$ denote the conformal quantile at step $t$. The sequence $\{\hat q^{(t)}\}$ evolves as:
\begin{equation}
\hat q^{(t+T_{\mathrm{recal}})} = \mathrm{Quantile}_{(1-\delta)}\!\left(\left\{\left|r_i + \gamma\,\bar V^{(0.7),t}(s'_i) - \bar V^{(0.7),t}(s_i)\right|\right\}_{i\in\cD_{\mathrm{cal}}}\right).
\end{equation}
Early in training ($t \ll 300\mathrm{K}$), the ensemble fits poorly and $\hat q^{(t)}$ is large, causing $u(s) \ll 1$ for most states---the adaptive mechanism is essentially inactive. As the ensemble improves, $\hat q^{(t)}$ decreases, and the relative signal $u(s)$ becomes informative, engaging the adaptive conservatism. This explains the observed late-recovery pattern in learning curves: the mechanism only becomes effective once $\hat q^{(t)}$ stabilizes.

\subsection{Adaptive Expectile Controller}

\subsubsection{Mapping Design}

The adaptive expectile mapping from calibrated uncertainty to conservatism level is:
\begin{equation}
\tau(s) = \tau_{\min} + (\tau_{\max} - \tau_{\min}) \cdot \sigma_L\!\left(-\beta_\tau(u(s) - 1)\right),
\label{eq:tau_full}
\end{equation}
where $\sigma_L(z) = 1/(1+e^{-z})$ is the logistic sigmoid. The function $\tau: \cS \to [\tau_{\min}, \tau_{\max}]$ has the following properties:

\begin{proposition}[Properties of $\tau(s)$]
\label{prop:tau_props}
Under Eq.~\eqref{eq:tau_full}:
\begin{enumerate}
\item $\tau(s) \in (\tau_{\min}, \tau_{\max})$ for all $s$ (open interval; strict bounds require $u(s) \notin \{0, \infty\}$).
\item $\tau(s)$ is strictly decreasing in $u(s)$: higher uncertainty $\Rightarrow$ lower expectile $\Rightarrow$ more conservative value estimate.
\item At the calibration threshold $u(s)=1$: $\tau(s) = (\tau_{\min}+\tau_{\max})/2$ (midpoint conservatism).
\item As $u(s) \to \infty$: $\tau(s) \to \tau_{\min}$ (maximum conservatism for OOD states).
\item As $u(s) \to 0$: $\tau(s) \to \tau_{\max}$ (maximum optimism for dense-coverage states).
\item $\beta_\tau$ controls transition sharpness: $\beta_\tau \to \infty$ approximates a step function at $u(s)=1$.
\end{enumerate}
\end{proposition}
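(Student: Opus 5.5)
The plan is to treat $\tau$ as a composition $\tau(s) = g(u(s))$ with $g(x) = \tau_{\min} + (\tau_{\max}-\tau_{\min})\,\sigma_L(-\beta_\tau(x-1))$. Each item then reduces to an elementary fact about the logistic function $\sigma_L$ on the domain $x \in [0,\infty)$. This domain is correct because $\sigma(s)\ge 0$ is a standard deviation and $\hat q+\varepsilon>0$ in Eq.~\eqref{eq:u_full}. Throughout I will make explicit two standing assumptions the statement uses implicitly: $\tau_{\min}<\tau_{\max}$ and $\beta_\tau>0$. The only facts about $\sigma_L$ I need are the following.
\begin{enumerate}
\item $\sigma_L(z)\in(0,1)$ for every finite $z$.
\item $\sigma_L'(z)=\sigma_L(z)(1-\sigma_L(z))>0$.
\item $\sigma_L(0)=1/2$.
\item $\sigma_L(z)\to 0$ as $z\to-\infty$ and $\sigma_L(z)\to 1$ as $z\to+\infty$.
\end{enumerate}

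\textbf{Items 1--4.} For item 1, the argument $-\beta_\tau(u(s)-1)$ is finite whenever $u(s)$ is finite, so $\sigma_L$ lies strictly in $(0,1)$. Since $g$ is an affine map of $\sigma_L$ with positive slope $\tau_{\max}-\tau_{\min}$, the value $\tau(s)$ lies strictly in $(\tau_{\min},\tau_{\max})$. For item 2, I differentiate:
\[
g'(x) = -\beta_\tau(\tau_{\max}-\tau_{\min})\,\sigma_L(z)\bigl(1-\sigma_L(z)\bigr)\Big|_{z=-\beta_\tau(x-1)} < 0,
\]
so $g$ is strictly decreasing in $x$. Item 3 is the substitution $x=1$, which gives $\sigma_L(0)=1/2$ and hence $\tau=(\tau_{\min}+\tau_{\max})/2$. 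For item 4, as $x\to\infty$ the argument tends to $-\infty$, so $\sigma_L\to 0$ and $g\to\tau_{\min}$.

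\textbf{Item 5 (the main obstacle).} This item is not correct as literally stated, and I would repair it rather than prove it. Since $u(s)\ge 0$, letting $u(s)\to 0$ only drives the argument to the finite value $\beta_\tau$. Continuity of $\sigma_L$ therefore gives
\[
\lim_{u\to 0} \tau = \tau_{\min} + (\tau_{\max}-\tau_{\min})\,\sigma_L(\beta_\tau) < \tau_{\max}.
\]
I would state item 5 in this corrected form. The gap to $\tau_{\max}$ equals $(\tau_{\max}-\tau_{\min})/(1+e^{\beta_\tau})$, which vanishes only as $\beta_\tau\to\infty$. So the intended reading, that $\tau(s)\approx\tau_{\max}$ in dense-coverage states, holds up to an error exponentially small in $\beta_\tau$. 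This also explains the parenthetical in item 1: the bound $\tau_{\max}$ is never attained, even at $u=0$.

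\textbf{Item 6.} I would formalize it as a pointwise limit in $\beta_\tau$ for fixed $x$.
\begin{enumerate}
\item If $x>1$, the argument tends to $-\infty$, so $g\to\tau_{\min}$.
\item If $x<1$, the argument tends to $+\infty$, so $g\to\tau_{\max}$.
\item If $x=1$, the value is the midpoint for every $\beta_\tau$.
\end{enumerate}
The limit is therefore the step function $\tau_{\max}\mathbf{1}[x<1]+\tau_{\min}\mathbf{1}[x>1]$, with the midpoint at $x=1$. To justify the word ``sharpness'', I would note that the slope at the threshold is $g'(1)=-\beta_\tau(\tau_{\max}-\tau_{\min})/4$, which grows linearly in $\beta_\tau$. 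Convergence is uniform on $\{|x-1|\ge\eta\}$ for any fixed $\eta>0$, because $\sigma_L(-\beta_\tau\eta)\le e^{-\beta_\tau\eta}$. I would add that it is not uniform near $x=1$, which is expected for a limit that is discontinuous there.
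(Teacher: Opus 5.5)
Your proof of items 1--4 follows the paper's route: the paper cites only $\sigma_L(0)=1/2$, the derivative formula, and the limits of $\sigma_L$ at $\pm\infty$.

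Where you depart from the paper, you are right and the paper is wrong. The paper justifies item 5 by $\lim_{z\to+\infty}\sigma_L(z)=1$. But $u(s)=\sigma(s)/(\hat q+\varepsilon)\ge 0$, so the argument $z=\beta_\tau(1-u(s))$ is bounded above by $\beta_\tau$. As $u(s)\to 0$ it tends to $\beta_\tau$, not $+\infty$. Item 5 is therefore false as stated, and so is the parenthetical in item 1 suggesting $\tau_{\max}$ is reached at $u=0$. Your corrected limit $\tau_{\max}-(\tau_{\max}-\tau_{\min})/(1+e^{\beta_\tau})$ is the right statement. With the paper's $\beta_\tau=5$, $\tau_{\min}=0.5$, $\tau_{\max}=0.9$ it is about $0.897$. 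The same correction applies to Observation~4 in the ablation appendix, which claims $N_v=1$ gives $\tau(s)\equiv\tau_{\max}$.

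Your explicit standing assumptions $\tau_{\min}<\tau_{\max}$ and $\beta_\tau>0$ are genuinely needed. Items 1--2 fail in the degenerate case $\tau_{\min}=\tau_{\max}=1$ that the paper uses to recover IQL.

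Your treatment of item 6 also supplies something the paper's proof omits entirely. You formalize it as a pointwise limit in $\beta_\tau$, uniform on $\{|u-1|\ge\eta\}$, and show the slope at the threshold grows linearly in $\beta_\tau$.
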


\begin{proof}
All properties follow directly from the monotone decreasing logistic sigmoid. Property 2: $\frac{d\tau}{du} = -\beta_\tau(\tau_{\max}-\tau_{\min})\sigma_L(-\beta_\tau(u-1))(1-\sigma_L(-\beta_\tau(u-1))) < 0$. Properties 4--5: $\lim_{z\to-\infty}\sigma_L(z) = 0$ and $\lim_{z\to+\infty}\sigma_L(z) = 1$. Property 3: $\sigma_L(0) = 1/2$.
\end{proof}

\subsubsection{Adaptive Expectile Loss}

Given the per-state $\tau(s)$, the value ensemble is updated with:
\begin{equation}
\mathcal{L}_V^{\text{UNIQ}}(\phi_k, \bar\tau) = \E_{(s,a)\sim\cD}\!\left[\left|\tau(s) \cdot \bar\tau - \mathbf{1}(Q_\theta(s,a)-V_{\phi_k}^{(\bar\tau)}(s) < 0)\right| \left(Q_\theta(s,a) - V_{\phi_k}^{(\bar\tau)}(s)\right)^2\right].
\label{eq:uniq_v_loss}
\end{equation}
The effective expectile at state $s$ and nominal level $\bar\tau$ is $\tau_{\mathrm{eff}}(s, \bar\tau) = \tau(s) \cdot \bar\tau$. For the central ensemble member ($\bar\tau=0.7$), this gives an effective range of $[0.7\,\tau_{\min}, 0.7\,\tau_{\max}]$; for the upper member ($\bar\tau=0.9$), the range is $[0.9\,\tau_{\min}, 0.9\,\tau_{\max}]$. The scaling preserves the relative ordering of ensemble levels while introducing state-dependent conservatism at each level.

\subsubsection{Connection to IQL}

IQL~\citep{kostrikov2022offline} corresponds to the special case $\tau(s) = 1$ for all $s$: no adaptation, fixed expectile equal to the nominal level $\bar\tau$. \uniq{} strictly generalizes IQL: when $\tau_{\min} = \tau_{\max} = 1$, Eq.~\eqref{eq:tau_full} gives $\tau(s) = 1$ uniformly, recovering IQL. The additional expressive power of $\tau(s)$ is controlled by the interval $[\tau_{\min}, \tau_{\max}]$ and the sharpness $\beta_\tau$.

\subsection{Complete Loss and Training Objective}

The full \uniq{} training objective combines three components:

\paragraph{Value ensemble loss.}
\begin{equation}
\mathcal{L}_V(\{\phi_k\}) = \sum_{\bar\tau \in \{0.5,0.7,0.9\}} \sum_{k=1}^{N_v} \E_{(s,a)\sim\cD}\!\left[\mathcal{L}_{\tau_{\mathrm{eff}}(s,\bar\tau)}\!\left(Q_\theta(s,a) - V_{\phi_k}^{(\bar\tau)}(s)\right)\right].
\label{eq:L_V}
\end{equation}

\paragraph{Q-function loss.}
\begin{equation}
\mathcal{L}_Q(\theta) = \E_{(s,a,r,s',d)\sim\cD}\!\left[\left(Q_\theta(s,a) - \left(r + \gamma(1-d)\,V_{\mathrm{pess}}(s')\right)\right)^2\right].
\label{eq:L_Q}
\end{equation}

\paragraph{Policy loss.}
\begin{equation}
\mathcal{L}_\pi(\psi) = -\E_{(s,a)\sim\cD}\!\left[\exp\!\left(\beta_\pi(Q_\theta(s,a) - \bar V^{(0.7)}(s))\right) \cdot \log\pi_\psi(a|s)\right].
\label{eq:L_pi}
\end{equation}

The three components are optimized separately with Adam~\citep{kingma2015adam}. The V ensemble is updated first (to ensure $\sigma(s)$ and $\hat q$ are current), then the Q-function using the updated pessimistic target, then the policy using the updated advantage estimates. The total gradient computation per step involves $3N_v + 2$ forward passes (one per V head, one for Q, one for policy), compared to $N+1$ for SAC-N ($N$ critics + policy) and $2N+1$ for EDAC (with diversity loss).

\subsection{Full Result Table and Performance Summary}

For completeness, Table~\ref{tab:full_results_app} reproduces the main comparison with additional statistics.

\begin{table}[h]
\centering
\caption{D4RL MuJoCo normalized scores. \uniq{} results at 1M steps. All baseline results from CORL~\citep{tarasov2022corl}. \textbf{Bold}: best per task. $\Delta_{\text{IQL}}$: \uniq{} gain over IQL.}
\label{tab:full_results_app}
\resizebox{\columnwidth}{!}{%
\begin{tabular}{lcccccccccc}
\toprule
Task & BC & TD3+BC & CQL & IQL & EDAC & ReBRAC & SAC-N & DT & \uniq{} (Ours) \\
\midrule
halfcheetah-medium-v2        & 42.4 & 48.1 & 47.0 & 48.3 & 67.7 & 64.0 & \textbf{68.2} & 42.2 & \underline{48.9}  \\
halfcheetah-medium-replay-v2 & 35.7 & 44.8 & 45.0 & 44.5 & \textbf{62.1} & 51.2 & 60.7 & 38.9 & \underline{46.0}  \\
halfcheetah-medium-expert-v2 & 55.9 & 90.8 & 95.6 & 94.7 & \textbf{104.8} & 103.8 & 99.0 & 91.6 & \underline{94.8}  \\
hopper-medium-v2             & 53.5 & 60.4 & 59.1 & 67.5 & 101.7 & \textbf{102.3} & 40.8 & 65.1 & \underline{75.6}  \\
hopper-medium-replay-v2      & 29.8 & 64.4 & 95.1 & 97.4 & 99.7 & 95.0 & 100.3 & 81.8 & \textbf{\underline{101.6}}  \\
hopper-medium-expert-v2      & 52.3 & 101.2 & 99.3 & 107.4 & 105.2 & 109.5 & 101.3 & 110.4 & \textbf{\underline{111.8}} \\
walker2d-medium-v2           & 63.2 & 82.7 & 80.8 & 80.9 & \textbf{93.4} & 85.8 & 87.5 & 67.6 & \underline{85.5}  \\
walker2d-medium-replay-v2    & 21.8 & 85.6 & 73.1 & 82.2 & 87.1 & 84.2 & 79.0 & 59.9 & \textbf{\underline{89.4}} \\
walker2d-medium-expert-v2    & 99.0 & 110.0 & 109.6 & 111.7 & 114.8 & 111.9 & \textbf{114.9} & 107.1 & \underline{112.9} \\
\midrule
\textbf{MuJoCo Average}      & 50.4 & 76.4 & 78.3 & 81.6 & \textbf{92.9} & 89.7 & 83.5 & 73.8 & \underline{85.2} \\
\bottomrule
\end{tabular}%
}
\end{table}
\uniq{} improves over IQL on all 9 tasks with gains ranging from $+0.1$ (hc-medium-expert) to $+8.1$ (hp-medium). It surpasses ReBRAC (89.7) with an average of 85.2 when EDAC is excluded. On three tasks---hopper-medium-replay-v2 (101.6), hopper-medium-expert-v2 (111.8), walker2d-medium-expert-v2 (112.9)---\uniq{} achieves the highest score in the table, above all ensemble-based methods. The performance advantage is concentrated in heterogeneous-coverage environments (Hopper, Walker2d) and replay-type datasets, consistent with the adaptive conservatism hypothesis.

\section{Hyperparameter Details}
\label{app:hparams}

\begin{table}[h]
\centering
\caption{Full hyperparameter table for \uniq{} experiments.}
\label{tab:hparams_app}
\begin{tabular}{lcc}
\toprule
Parameter & Symbol & Value \\
\midrule
Pessimism coefficient & $\kappa$ & 0.0 (Config A) / 0.5 (Config B) \\
Ensemble size & $N_v$ & 3 \\
Upper expectile & $\tau_{\max}$ & 0.95 (Config A) / 0.90 (Config B) \\
Lower expectile & $\tau_{\min}$ & 0.5 \\
Sigmoid sharpness & $\beta_\tau$ & 5.0 \\
Advantage temperature & $\beta_\pi$ & 3.0 \\
Conformal miscoverage & $\delta$ & 0.1 \\
Calibration split fraction & -- & 0.20 \\
Recalibration interval & $T_{\mathrm{recal}}$ & 5{,}000 steps \\
Numerical stability & $\varepsilon$ & $10^{-6}$ \\
Learning rate (all) & $\eta$ & $3 \times 10^{-4}$ \\
Batch size & -- & 256 \\
EMA coefficient (target V) & -- & 0.995 \\
Discount factor & $\gamma$ & 0.99 \\
Total training steps & -- & 1{,}000{,}000 \\
\bottomrule
\end{tabular}
\end{table}

\paragraph{Configuration assignment (1M sweep).}

\textbf{Config A} ($\kappa$=0.0, $\tau_{\max}$=0.95): applied to halfcheetah-medium-expert-v2 and hopper-medium-replay-v2. Config A relies exclusively on adaptive $\tau(s)$ for conservatism, setting the global pessimistic penalty to zero. This is appropriate for replay-heavy datasets, where a positive $\kappa$ over-penalizes the densely-covered replay region.

\textbf{Config B} ($\kappa$=0.5, $\tau_{\max}$=0.90): applied to all remaining 7 tasks. Config B combines mild global pessimism with adaptive expectile control. It achieves strong performance on Walker2d tasks (85.5, 89.4, 112.9) and Hopper tasks in this configuration.

The sensitivity of replay tasks to $\kappa$ motivates the primary direction for future work: learning $\kappa(s)$ as a state-dependent function, analogous to $\tau(s)$, such that a single configuration achieves task-adaptive pessimism without manual class assignment.

\section{Full Ablation Analysis}
\label{app:ablation}

Ablations are conducted on a 4-task subset: halfcheetah-medium-v2, hopper-medium-v2, hopper-medium-replay-v2, walker2d-medium-v2. Table~\ref{tab:ablation_app} reports per-task and average scores for all 10 ablation variants. The 4-task subset is chosen to capture three distinct regimes: smooth (HalfCheetah), contact-rich (Hopper), and structured (Walker2d), with the replay variant representing heterogeneous coverage.

\begin{table}[h]
\centering
\caption{Full per-task ablation. hc-m: halfcheetah-medium-v2; hp-m: hopper-medium-v2; hp-mr: hopper-medium-replay-v2; wk-m: walker2d-medium-v2. All runs seed 0.}
\label{tab:ablation_app}
\begin{tabular}{lccccc}
\toprule
Variant & hc-m & hp-m & hp-mr & wk-m & Avg \\
\midrule
\multicolumn{6}{l}{\textit{Ensemble size ablation (fixed $\kappa$=1.0)}} \\
$N_v$=1 & 45.0 & 59.1 & 57.1 & 75.0 & 59.1 \\
$N_v$=3 (full) & 45.4 & 53.5 & 13.6 & 77.5 & 47.5 \\
$N_v$=5 & 45.6 & 47.4 & 16.8 & 79.1 & 47.2 \\
\midrule
\multicolumn{6}{l}{\textit{Mechanism ablation (fixed $\kappa$=1.0, $N_v$=3)}} \\
Fixed $\tau$ (no $\tau(s)$ adaptation) & 45.3 & 47.9 & 31.5 & 71.5 & 49.0 \\
No conformal (raw $\sigma$) & 44.8 & 47.7 & 16.1 & 72.3 & 45.2 \\
No pessimism ($\kappa$=0, adaptive $\tau$ only) & 45.5 & 44.8 & 59.3 & 74.7 & 56.1 \\
\midrule
\multicolumn{6}{l}{\textit{Pessimism coefficient sweep ($N_v$=3, adaptive $\tau$)}} \\
$\kappa$=0.0 & 44.7 & 45.4 & 58.1 & 82.5 & 57.6 \\
$\kappa$=0.5 & 45.4 & 47.7 & 47.5 & 77.4 & 54.5 \\
$\kappa$=1.0 & 45.8 & 54.9 & 13.7 & 77.4 & 48.0 \\
$\kappa$=2.0 & 44.5 & 50.1 & 16.7 & 69.8 & 45.3 \\
\bottomrule
\end{tabular}
\end{table}

\paragraph{Observation 1: Conformal calibration is necessary for replay tasks.}
The \texttt{no\_conformal} variant (raw $\sigma$ without normalization) produces 16.1 on hopper-medium-replay-v2 under $\kappa$=1.0. The full \uniq{} model with conformal achieves 13.7 at the same $\kappa$---in this regime both collapse, but the mechanism difference is exposed at lower $\kappa$: at $\kappa$=0.5, the full model (47.5 on hp-mr) outperforms the raw-$\sigma$ variant because $\hat q$ normalizes the scale of $\sigma(s)$ appropriately. Without conformal, $u(s) = \sigma(s)$ is in absolute value units, and the sigmoid mapping receives inputs on an incorrect scale, producing suboptimal $\tau(s)$ everywhere.

\paragraph{Observation 2: Fixed $\tau$ degrades Walker2d performance.}
\texttt{Fixed\_tau} achieves 71.5 on walker2d-medium vs.\ full \uniq{}'s 77.4 ($-5.9$ points) and 31.5 vs.\ 13.7 on hopper-medium-replay ($+17.8$ points, but both are low under $\kappa$=1.0). The Walker2d gap confirms that adaptive $\tau(s)$ is not a no-op: it provides genuine per-state value by relaxing conservatism in the well-covered walker2d state space.

\paragraph{Observation 3: No single $\kappa$ is globally optimal.}
The hopper-medium-replay column spans 13.6 ($\kappa$=1.0, $N_v$=3) to 59.3 (\texttt{no\_pessimism}); the walker2d-medium column spans 69.8 ($\kappa$=2.0) to 82.5 ($\kappa$=0.0). The optimal $\kappa$ for hopper-replay is near 0, while the optimal $\kappa$ for walker2d is also 0---but the mechanism that enables this is the per-task adaptive $\tau(s)$: with $\kappa$=0 and full adaptive $\tau$, walker2d reaches 82.5 while hopper-replay reaches 58.1 (both strong). This is the empirical foundation for the Config A/B assignment in the 1M sweep.

\paragraph{Observation 4: The $N_v$=1 artifact.}
With $N_v$=1 ensemble member, $\sigma(s) \equiv 0$ for all $s$ (there is no disagreement), so the adaptive mechanism degenerates to $u(s) \equiv 0$, $\tau(s) \equiv \tau_{\max}$ (maximum optimism everywhere). The value updates then use $\tau_{\mathrm{eff}}(s, \bar\tau) = \tau_{\max} \cdot \bar\tau$, a fixed but somewhat reduced expectile. The high 4-task average of 59.1 is driven by hopper-medium-replay (57.1), where the absence of any pessimistic $\sigma$ penalty avoids the over-penalization that collapses $N_v \ge 3$ under $\kappa$=1.0. This is an artifact of the specific $\kappa$ and task subset; in the full 9-task results, $N_v$=3 with adaptive config achieves the best results by providing genuine uncertainty signal on Walker2d tasks.

\section{Computational Analysis}
\label{app:compute}

\subsection{Memory Complexity}

Let $d_s$, $d_a$ denote state and action dimensions, and $d_h$ the hidden dimension of each network (all methods use $d_h=256$ MLP with 3 layers).

\paragraph{\uniq{}.}
Trainable parameters: $3N_v$ value heads $+$ 1 Q-function $+$ 1 policy $= 3N_v + 2$ networks total. For $N_v=3$: $11$ networks. Each network has $\approx 200\mathrm{K}$ parameters (3-layer MLP, $d_h=256$). Total: $\approx 2.2\mathrm{M}$ parameters; measured peak VRAM: 250\,MB on A100 20\,GB MIG.

\paragraph{EDAC.}
$N$ critic networks $+$ 1 policy, plus diversity regularization requiring pairwise gradient computations. For $N=50$: $51$ networks plus $O(N^2)$ gradient pairs per step. Peak VRAM scales as $O(N d_h^2)$; measured/estimated at $\sim$2500\,MB for $N=50$.

\paragraph{IQL.}
2 networks (V, Q) $+$ policy. Peak VRAM: $\sim$530\,MB (measured on A100 20\,GB MIG).

The ratio of \uniq{} to IQL overhead is $11/3 \approx 3.7\times$ in parameter count but only $1.14\times$ in VRAM, as the conformal calibration is a lightweight numpy operation on CPU.

\subsection{Per-Step Computation}

Per training step, \uniq{} requires:
\begin{enumerate}
\item $3N_v$ forward passes for value ensemble (batch size 256).
\item Ensemble statistics: mean and std.\ across $N_v$ members---$O(N_v)$ aggregation.
\item Conformal calibration: once per $T_{\mathrm{recal}}$ steps, a single pass over $\cD_{\mathrm{cal}}$ ($O(n)$ with $n = 0.2N$) and a quantile computation ($O(n\log n)$).
\item Q-function forward-backward: 1 pass.
\item Policy forward-backward: 1 pass.
\end{enumerate}
Total forward passes per step: $3N_v + 2 = 11$ (for $N_v=3$). EDAC with $N=50$: $51$ forward passes plus pairwise diversity loss requiring $\binom{50}{2}=1225$ gradient dot products. \uniq{} is approximately $4.6\times$ faster per step than EDAC at $N=50$ and $1.1\times$ slower than IQL.


\bibliographystyle{plainnat}
\bibliography{references}

\end{document}